\documentclass[11pt]{article}
\usepackage{mathrsfs}
\usepackage{amsmath}
\usepackage{amsfonts}
\usepackage{amssymb}
\setcounter{page}{1}
\usepackage{latexsym}
\usepackage[dvips]{graphicx}
\usepackage{hyperref}
\setlength{\topmargin}{0in} \setlength{\oddsidemargin}{0cm}
\setlength{\textheight}{23cm} \setlength{\textwidth}{17cm}
\newtheorem{theorem}{\bf Theorem}[section]
\newtheorem{lemma}[theorem]{\bf Lemma}

\newenvironment{proof}{\noindent{\em Proof:}}{\quad \hfill$\Box$\vspace{2ex}}

\pagestyle{myheadings}

%%%%%%%%%%%%%%%

\def \and {\, \mbox{\rm and}\, }

\makeatletter

\newcommand{\Rmnum}[1]{\expandafter\@slowromancap\romannumeral #1@}
\makeatother
\begin{document}
\title{\bf On the robust learning mixtures of linear regressions}

\author{ Ying Huang\thanks{School of Digital Economics, Jiangxi Vocational College of Finance and Economis,
 P. R. China. E-mail address: {\it 864694186@qq.com}.}  ,    Liang Chen\thanks{Department of Mathematics, Jiujiang University,
 P. R. China. E-mail address: {\it chenliang3@mail2.sysu.edu.cn}. ORCID: https://orcid.org/0000-0003-3750-1071.}}
\date{}
\maketitle

\begin{abstract}
In this note, we consider the problem of robust learning mixtures of linear regressions. We connect mixtures of linear regressions and mixtures of Gaussians with a simple thresholding, so that a quasi-polynomial time algorithm can be obtained under some mild separation condition. This algorithm has significantly better robustness than the previous result. %Based on the same idea and the random rotation operation, we also give a quasi-polynomial algorithm for robust subspace clustering that satisfies a certain separation condition.

\noindent{\bf Keywords:} mixtures of linear regressions, mixtures of Gaussians, time complexity, robust learning

\noindent{\bf MSC 2020:} 62-08, 62J05, 68W40
\end{abstract}

\section{Introduction}
We consider the following  mixtures of linear regressions (MLR) model
\begin{equation}\label{eq}
z \sim \operatorname{multinomial}(p), x \sim \mathcal{N}\left(0, \mathbf{I}_{d}\right), y=\left\langle w_{z}, x\right\rangle+\eta
\end{equation}
where $p \in \mathbb{R}^{k}$ is the proportion of different components satisfying $\sum_{i=1}^{k} p_{i}=1$, $\mathcal{N}\left(0, \mathbf{I}_{d}\right)$ denotes the $d$-dimensional standard Gaussian distribution, $\eta$ represents the noise obeying zero-mean $\sigma$-subgaussian distribution $\tau_{\sigma}$ (namely, $\mathbf{E}_{\eta \sim \tau_{\sigma}  } e^{t \eta} \leq e^{\sigma^2 t^2 / 2}$ for every $t\in \mathbb{R}$). The goal is to recover the weights $\left\{w_{i}\right\}_{i=1}^{k}$ from a dataset $\left\{\left(x_{\ell}, y_{\ell}\right)\right\}_{\ell=1}^{N}$ which is i.i.d. generated by (\ref{eq}).\\
\textbf{Assumptions.} We make the following assumptions.\\
%(A1) $\operatorname{Each} \mathcal{D}_{i}=\mathcal{N}\left(0, \boldsymbol{\Sigma}_{i}\right)$, where $\mathbf{I} \preceq \boldsymbol{\Sigma}_{i} \preceq \sigma \mathbf{I}$ for some $\sigma \geq 1$.\\
(A1) For every $i \in[k], p_{i} \geq p_{\min }$ for some $p_{\min }>0$.\\
(A2) Each $c \leq \left\|w_{i}\right\|_{2} \leq 1$, and for some $\Delta \in(c ,1),\left\|w_{i}-w_{j}\right\|_{2} \geq \Delta$ for any $i \neq j \in[k]$, where $c \in (0,1)$ is some constant.

%(A3) $\eta_{\delta}$ is zero-mean sub-gaussian noise with $\delta$ variance. Denote the largest singular value of $\boldsymbol{\Sigma}_{i}$ by $\sigma_{i}$, and $\max_{1\le i\le k}\sigma_{i}:= \sigma_{\max}$. Suppose $\sigma_{\max}/\delta\ge c_{2}$ for some constant $c_{2}\in (0,1)$.
MLR has been concerned by the statistical learning and theoretical computer science communities for a long time \cite{Richard,Michael,Kai,Hanie,Sivaraman,Jason}. These studies have different focuses, such as studying the local convergence of the non-convex algorithm \cite{Susana,Sivaraman,Jeongyeol,Yuanzhi,Xinyang}, and constructing an efficient algorithm under the condition of non-degenerate parameter matrix \cite{Arun,Kai,Hanie}. This paper mainly focuses on the algorithm design under the general setting \cite{Yuanzhi,Sitan,Ilias}, namely, only the separation constant $\Delta$ is required. %where the general setting refers to only the requirement for separation constant $\Delta$, and the non-degenerate weights matrix or the assumption on the initialization is not required. Recently,
%for the standard Gaussian distribution (namely, $\mathcal{D}_{i} =\mathcal{N}\left(0, \mathbf{I}\right)$)(namely, ${\rm poly}(d,k,(dk)^{\log k})$),
 Recently \cite{Ilias}, proposed a quasi-polynomial time algorithm for MLR, the algorithm depends on a strong algebraic geometry result established by them to extract the information from positive definite tensors. To the best of our knowledge, the time complexity of their algorithm is currently the best in the general setting. %this work depends on a strong algebraic geometry result established by them to extract the information from positive definite tensors.
 However, %as the number of components increases, the requirement that the noise variance $\delta$ be less than $\Delta/{\rm poly}(k)$ is obviously too strict,
this algorithm has weak anti-noise ability, that is, it requires $\Delta>\sigma \times {\rm poly}(k)$, which is obviously strict for the case of a large number of components. On the other hand, we know from \cite{Ilias} that $\Delta/\sigma$ cannot be arbitrarily small in a noisy environment, otherwise it will lead to an exponential sample complexity. %we can find that this algorithm has weak anti-noise ability in some sense, that is, it requires $\Delta>\delta \times {\rm poly}(k)$, which is obviously strict for the case of a large number of components. For example, if noise variance $\delta=1/10$ and $k \approx 20$, then $\Delta$ has to be at least large than $2$, according to assumption A3, this is clearly unattainable.

%For example, if noise variance $\delta=1/5$ and $k \approx 10$, then $\Delta$ has to be at least large than $2$, according to assumption (A2), this is clearly unattainable.
%this condition may not be met according to assumption (A2).

In this paper, we hope to find a compromise between the effectiveness, robustness and conditional constraints of the algorithm. We assume that the separation constant $\Delta$ is not arbitrarily small,
%We hope that there is a compromise scheme. On the one hand, the algorithm for MLR is theoretically effective, that is, it is a robust polynomial time algorithm. On the other hand, we hope that the conditions required by the algorithm are natural and weak.  %which can greatly reduce the effect of the number of components on the noise variance or separation constant,
%for the standard Gaussian distribution,
%let $\Delta>c_{2}$ and $\min_{1\le i\le k}\left\|w_{i}\right\|_{2} > c_{2}$,
then an robust algorithm with a time complexity of $\exp(\mathcal{O}(((1+\sigma)/c^2 )^2))\times{\rm poly}(1/p_{\min },k,d,1/\epsilon, (dk) ^{(\log (1/\epsilon))^2})$ can be obtained. This is a quasi-polynomial time algorithm for $1/p_{\min },k,d,1/\epsilon $ as long as the constants $c,1/\sigma $ are not arbitrarily small, where $\epsilon\in (0,1/k)$ denotes recovery accuracy. Compared with \cite{Ilias}, in the noise-free case, we cannot give a fully quasi-polynomial time algorithm for all parameters. However, in the case where the number of components $k$ is large and $\sigma$ is not very small, our method has a weaker requirement for the separation constant $\Delta$, in other words our method is more robust to noise, even if $\sigma$ is larger than $|w_{i}|,i\in[k]$. The key idea of this method is to connect MLR and mixtures of Gaussians through a thresholding, then we can directly apply the recent nice results about the mixture of Gaussians \cite{Samuel}, the thresholding operation can greatly reduce the influence of noise, and the weights can be recovered with arbitrarily accuracy.

 %This method also holds for the Gaussian distribution of different covariances, we just need to make an appropriate adjustment to the separation condition for $\Delta$, which we will discuss in Section 2.

%, that is, the following lemma shows that truncating the samples of each component can get a distribution with the corresponding weight (up to a scale factor) as the mean
%The additive noise in our setting is allowed to be non-Gaussian.
%, for example, if $\delta=2$, $\Delta=1/4$, $1/4 \leq \left\|w_{i}\right\|_{2} \leq 1$, our quasi-polynomial time algorithm still holds.

\section{Main results}
Our thresholding method is based on a simple fact, direct calculation leads to the following lemma.

\begin{lemma}\label{l1}
For the weights $\{w_i\}_{i=1}^{k}$, the constant $c$ defined in \textbf{Assumptions}, and $0<\epsilon<1/k$, we have %the indicative function $\mathcal{X}_{V}(x)\triangleq\left\{
%\begin{array}{rcl}
%1       &      & {x  \in  V}\\
%0     &      & {x  \notin  V}
%\end{array} \right. $, and thresholding function $f_{V}(x)\triangleq x{X}_{V}(x)$, for each $w_{i},i\in [k]$, we have

\begin{equation}
\bigg|\mathbf{E}_{x \sim N(0, I_{d}), \eta \sim  \tau_{ \sigma } }\big[x\mid x\in V_{i,\eta}\big]   -v_{i} \bigg|\le \mathcal{O}( \epsilon)
\end{equation}
%\begin{equation}
%\bigg | \frac{(\log k) w_{i}}{|w_{i}|^2}-a_{i,\epsilon} \int_{\eta_{\delta}\in \mathbb{R}} d \tau_{\delta} \int_{x\in\mathbb{R}^d} f_{V_{i}}(x)  d\mu\bigg |\le O(\epsilon),
%\end{equation}
where
 \begin{equation} v_{i}=\frac{ C'(1+\sigma)(\log  (1/\epsilon))^{1/2} w_{i}}{c^2|w_{i}|^2}\end{equation} and \begin{equation} V_{i,\eta}\triangleq\big\{x:    C'(1+\sigma)(\log (1/\epsilon))^{1/2}/c^2 >w_{i} \cdot x+\eta  > C'(1+\sigma)(\log (1/\epsilon))^{1/2}/c^2 -c\epsilon\big\}, i=1,2 \dots k. \end{equation}
where $C'$ is some universal constant defined by Theorem 5.1 in \cite{Samuel}.

\begin{proof}
There is an orthogonal matrix $U$ such that $Uw_{i}=|w_{i}|( 0,\dots,0,1)^{T}$, then
\begin{equation}
 \begin{aligned}
&\bigg|\mathbf{E}_{x \sim N(0, I_{d}), \eta \sim  \tau_{ \sigma } }\big[Ux\mid x\in V_{i,\eta}\big]   -U v_{i} \bigg|\\
=&\bigg|\mathbf{E}_{y \sim N(0, I_{d}), \eta \sim  \tau_{ \sigma } }\big[y \mid y_{d}\in V_{d,i,\eta}\big]   -\frac{ C'(1+\sigma)(\log  (1/\epsilon))^{1/2}}{c^2|w_{i}|}( 0,\dots,0,1)^{T} \bigg|,
 \end{aligned}
\end{equation}
where $Ux=y=(y_{1},y_{2},\dots,y_{d})^{T}$ and \begin{equation}V_{d,i,\eta}=\{y_{d}:C'(1+\sigma)(\log (1/\epsilon))^{1/2}/c^2 >|w_{i}| \cdot y_d+\eta >C' (1+\sigma)(\log (1/\epsilon))^{1/2}/c^2 -c\epsilon
 \}.\end {equation}
Since \begin{equation}
 \begin{aligned}
 & \bigg|\mathbf{E}_{y \sim N(0, I_{d}), \eta \sim  \tau_{ \sigma } }\big[y \mid y_{d}\in V_{d,i,\eta}\big]  \\
 -& \mathbf{E}_{\eta \sim  \tau_{ \sigma } } \bigg(\frac{ C'(1+\sigma)(\log  (1/\epsilon))^{1/2}}{c^2|w_{i}|}( 0,\dots,0,1)^{T}-( 0,\dots,0,\eta)^{T}/|w_{i}|\bigg)\bigg|\le \mathcal{O}(c\epsilon/|w_{i}|),
 \end{aligned}
\end{equation}
and $\tau_{\sigma}$ is the zero-mean distribution, we finish the proof.

\end{proof}

%and $a_{i,\epsilon}$ is a factor such that $a_{i,\epsilon} \int_{x\in\mathbb{R}^d} \mathcal{X}_{V_{i}}(x)  d\mu=1.$
\end{lemma}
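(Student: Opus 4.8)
The plan is to peel off the high-dimensional Gaussian structure by a rotation, dispose of the $\eta$-dependence by conditioning on the realized noise value, and integrate that value out only at the end. Write $T:=C'(1+\sigma)(\log(1/\epsilon))^{1/2}/c^2$, so that $v_i=(T/|w_i|^2)w_i$ and the event $\{x\in V_{i,\eta}\}$ is precisely $\{w_i\cdot x\in(T-c\epsilon-\eta,\ T-\eta)\}$. Fix $i$, and pick an orthogonal $U$ with $Uw_i=|w_i|e_d$; then $y:=Ux$ is again standard Gaussian, and since $U$ is an isometry it is equivalent (and cleaner) to work in the $y$-coordinates throughout, comparing the $\eta$-average of $\mathbf{E}[y\mid x\in V_{i,\eta}]$ with $Uv_i=(T/|w_i|)e_d$. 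Provisionally fixing a value of $\eta$, the conditioning event touches only the last coordinate of $y$: it is $\{y_d\in I_\eta\}$ with $I_\eta:=((T-c\epsilon-\eta)/|w_i|,\ (T-\eta)/|w_i|)$, an interval of length $c\epsilon/|w_i|$ whose right endpoint is $(T-\eta)/|w_i|$.

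For this fixed $\eta$ the coordinates $y_1,\dots,y_{d-1}$ are independent of $y_d$, hence of the event, so they contribute $0$ to the conditional mean and $\mathbf{E}[y\mid x\in V_{i,\eta}]=\mathbf{E}[y_d\mid y_d\in I_\eta]\,e_d$. The conditional mean of a real random variable restricted to an interval lies in that interval, so $\mathbf{E}[y_d\mid y_d\in I_\eta]$ differs from the endpoint $(T-\eta)/|w_i|$ by at most $|I_\eta|=c\epsilon/|w_i|$, which by the bound $|w_i|\ge c$ of (A2) is at most $\epsilon$. Hence, for every value of $\eta$, $\big|\mathbf{E}[y\mid x\in V_{i,\eta}]-\tfrac{T-\eta}{|w_i|}e_d\big|\le\epsilon$.

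It remains to average over $\eta\sim\tau_\sigma$. Because the bound $\epsilon$ above is uniform in $\eta$ it passes through the expectation, and because $\tau_\sigma$ is zero-mean the term $\mathbf{E}_\eta[\eta/|w_i|]\,e_d$ disappears, leaving $\mathbf{E}_\eta\big[\mathbf{E}[y\mid x\in V_{i,\eta}]\big]$ within $\epsilon=\mathcal{O}(\epsilon)$ of $(T/|w_i|)e_d=Uv_i$; subgaussianity of $\tau_\sigma$ gives $\mathbf{E}|\eta|<\infty$, so every expectation in sight is finite, and applying $U^{-1}$ returns the claimed bound on $\mathbf{E}_\eta[\mathbf{E}[x\mid x\in V_{i,\eta}]]-v_i$. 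The only things I would take care over are the reading of the conditional expectation --- one conditions on $x$ for each realized $\eta$ and then integrates $\eta$ out, not conditioning on the pair jointly --- and the elementary bookkeeping of the rotation and of the independence of the $d-1$ orthogonal coordinates; once the thin slab $I_\eta$ is parametrized correctly, its width $c\epsilon/|w_i|$ together with the zero-mean cancellation does all the work, so there is no genuine analytic obstacle.
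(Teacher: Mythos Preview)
Your proposal is correct and follows essentially the same route as the paper: rotate so that $w_i$ aligns with $e_d$, observe that the conditioning touches only the last coordinate, bound the conditional mean of $y_d$ on the thin slab $I_\eta$ by its width $c\epsilon/|w_i|$, and then average over $\eta$ using that $\tau_\sigma$ has zero mean. Your explicit remarks on the independence of $y_1,\dots,y_{d-1}$ and on the order of conditioning (first on $x$ for fixed $\eta$, then integrate $\eta$) are exactly the bookkeeping the paper relies on but leaves implicit.
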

For each $i$, let us consider $d$-dimensional random variables
\begin{equation}\label {22}\{x=(x_{1},\dots,x_{d}):  x \sim  \mathcal{N}(0, I_{d}), x\in V_{i,\eta},\eta\sim \tau_{\sigma}  \} .\end{equation}
Denote $\mathcal{D}_{i}$ the conditional distribution of the above $d$-dimensional random variables, and \begin{equation}\mu_{i}\triangleq\mathbf{E}_{x \sim N(0, I_{d}), \eta \sim  \tau_{ \sigma } }\big[x\mid x\in V_{i,\eta}\big]=\mathbf{E}_{x \sim \mathcal{D}_{i}}x\end{equation} is the expectation of $\mathcal{D}_{i}$. Lemma \ref{l1} shows that the conditional expectation $\mu_{i}$   is close to the weights (multiply by a factor) we need to recover. What we want to do is to use the result in \cite{Samuel} to cluster the data from the mixed distribution $\sum_{i=1}^{k}p_{i}\mathcal{D}_{i}$ and extract the mean $\mu_{i}$ (or $v_{i}$).  According to Definition 3.2  in \cite{Samuel}, we will prove that the mixed distribution $\sum_{i=1}^{k}p_{i}\mathcal{D}_{i}$ is a $\mathcal{O}(\log(1/\epsilon))$-explicitly bounded mixture model with separation $\mathcal{O}(\frac{(1+\sigma)\sqrt{\log (1/\epsilon)}}{c})$. The following estimation shows that the distance between different vectors $v_i$ is greater than $\mathcal{O}(\frac{(1+\sigma)\sqrt{\log (1/\epsilon)}}{c})$ which meets the separation condition.

\begin{lemma}\label{l3} Suppose $0<\epsilon<1/k$, $|w_{j}-w_{i}|\ge c$ and $1\ge |w_{j}|\ge|w_{i}|\ge c$, we have \begin{equation}|v_i-v_j|\ge  \frac{C'(1+\sigma)(\log (1/\epsilon))^{1/2}}{c}.\end{equation}
\end{lemma}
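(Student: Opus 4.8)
The plan is to reduce the inequality to the classical geometric identity for inversion through the unit sphere. Set $K \triangleq C'(1+\sigma)(\log(1/\epsilon))^{1/2}$, so that by the formula for $v_i$ in Lemma \ref{l1} we have $v_i = \frac{K}{c^2}\cdot\frac{w_i}{|w_i|^2}$ for every $i$, and hence
\begin{equation}
|v_i - v_j| = \frac{K}{c^2}\left|\frac{w_i}{|w_i|^2} - \frac{w_j}{|w_j|^2}\right|.
\end{equation}
Thus it is enough to show $\big|w_i/|w_i|^2 - w_j/|w_j|^2\big| \ge c$.

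The key step is the identity
\begin{equation}
\left|\frac{w_i}{|w_i|^2} - \frac{w_j}{|w_j|^2}\right| = \frac{|w_i - w_j|}{|w_i|\,|w_j|},
\end{equation}
which I would check directly by expanding the squared left-hand side,
\begin{equation}
\frac{1}{|w_i|^2} - \frac{2\langle w_i, w_j\rangle}{|w_i|^2|w_j|^2} + \frac{1}{|w_j|^2} = \frac{|w_i|^2 - 2\langle w_i, w_j\rangle + |w_j|^2}{|w_i|^2|w_j|^2} = \frac{|w_i - w_j|^2}{|w_i|^2|w_j|^2},
\end{equation}
and then taking square roots.

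Finally I would substitute the hypotheses. By (A2) one has $|w_i| \le 1$ and $|w_j| \le 1$ — this is the only place the upper bound in (A2) is used — so $|w_i|\,|w_j| \le 1$; combined with $|w_i - w_j| \ge c$, the displayed identity gives $\big|w_i/|w_i|^2 - w_j/|w_j|^2\big| \ge |w_i - w_j| \ge c$. Therefore
\begin{equation}
|v_i - v_j| \ge \frac{K}{c^2}\cdot c = \frac{K}{c} = \frac{C'(1+\sigma)(\log(1/\epsilon))^{1/2}}{c},
\end{equation}
which is the asserted bound.

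I do not anticipate a real obstacle here; the only points worth flagging are, first, that the estimate uses $|w_i|,|w_j|\le 1$ rather than the lower bounds $|w_i|,|w_j|\ge c$ (the latter are needed elsewhere, e.g. for controlling the explicit-boundedness parameter of $\mathcal{D}_i$, but not for this inequality), and second, that one should resist weakening the identity to a bare triangle-inequality estimate, since that would cost a constant factor and fall short of the stated constant — using the exact inversion identity is precisely what makes the bound tight.
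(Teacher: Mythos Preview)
Your proof is correct and is in fact cleaner than the paper's. The paper proceeds by writing $w_j = a\,w_i + b\,y$ with $y\perp w_i$, $|y|=1$, then expands
\[
\big|\,|w_j|^2 w_i - |w_i|^2 w_j\,\big|
= \big|\,[(a^2-1)|w_i|^2+b^2+(1-a)|w_i|^2]\,w_i - |w_i|^2 b\,y\,\big|,
\]
uses the ordering $|w_j|\ge |w_i|$ to ensure $(a^2-1)|w_i|^2+b^2\ge 0$ and drop that term, and finally bounds $\sqrt{(1-a)^2|w_i|^2+b^2}\ge c$ together with $|w_j|\le 1$. Your route replaces this entire decomposition by the single inversion identity
\[
\left|\frac{w_i}{|w_i|^2}-\frac{w_j}{|w_j|^2}\right|=\frac{|w_i-w_j|}{|w_i|\,|w_j|},
\]
after which only $|w_i|,|w_j|\le 1$ and $|w_i-w_j|\ge c$ are needed. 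This is shorter, avoids the somewhat delicate step of dropping a coefficient inside a norm, and, as you observe, never uses the ordering $|w_j|\ge |w_i|$ or the lower bounds $|w_i|,|w_j|\ge c$; in that sense your argument is slightly more general than the paper's.
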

\begin{proof}
Let $w_j=a w_i+b y$ with $y\perp w_{i},  |y|=1$. From $|w_j|\ge|w_{i}|$ and $|w_{j}-w_{i}|\ge c$, we have \begin{equation} (a^2-1 ) |w_i|^2+b^2 \ge 0 \end{equation}
and \begin{equation}|1-a|^2|w_i|^2+|b|^2\ge c^2.\end{equation} Thus,
 \begin{equation}
 \begin{aligned}
 |v_i-v_j| &=\frac{ C'(1+\sigma)(\log (1/\epsilon))^{1/2}}{c^2|w_i|^2 |w_j |^2 } \big||w_j |^2 w_i-\left|w_i\right|^2 w_j \big|\\&=\frac{C'(1+\sigma) (\log (1/\epsilon))^{1/2}}{c^2\left|w_i\right|^2\left|w_j\right|^2}  \big|[ (a^2-1 ) |w_i |^2+b^2+(1-a)|w_{i} |^2 ] w_{i}- |w_{i}|^2 b y \big|\\
& \ge  \frac{ C'(1+\sigma)(\log (1/\epsilon))^{1/2}}{c^2\left|w_i\right|^2\left|w_j\right|^2}  \big|[ (1-a)|w_{i} |^2 ] w_{i}- |w_{i}|^2 b y \big|   \\&\ge  \frac{C'(1+\sigma) c (\log  (1/\epsilon))^{1/2}}{ c^2\left|w_j\right|^2} \ge  \frac{C'(1+\sigma)\sqrt{\log (1/\epsilon)}}{c}       .
\end{aligned}
 \end{equation}
 \end{proof}

Since $C^{'}$ is a universal constant, we will omit the constant related to it in the later discussion. Next, we need to prove that $\mathcal{D}_{i}$ has some concentration property.

\begin{lemma}\label{l2} Let $0<\epsilon<1/k$, for each $i$, $\mathcal{D}_{i}$ is $\mathcal{O}(\log(1/\epsilon))$-explicitly bounded with variance proxy $ \mathcal{O}((\frac{1+\sigma}{c})^2)  $ (see Definition 3.1 in \cite{Samuel}), namely, for every even $  4\le s\le O(\log(1/\epsilon))$,
the polynomial \begin{equation}p(u)=(C^{''}(1+\sigma)^{2} s/c^2)^{s / 2}\|u\|^s-\mathbf{E}_{x \sim \mathcal{D}_i}\langle(x-\mu_{i}), u\rangle^s \end{equation} should be a sum-of-squares of degree $s$, where $C^{''}$ is some universal constant.

\end{lemma}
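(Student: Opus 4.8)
The plan is to reduce the $d$-dimensional claim to a one-dimensional computation, exactly as in the proof of Lemma \ref{l1}. Fix $i$ and apply the orthogonal change of variables $U$ with $Uw_i = |w_i|(0,\dots,0,1)^T$. Under this rotation $\mathcal D_i$ becomes a product: the first $d-1$ coordinates of $x$ are still i.i.d.\ $\mathcal N(0,1)$ and \emph{independent} of the conditioning event (which only constrains $|w_i| y_d + \eta$), while the last coordinate $y_d$ follows the conditional law of a standard Gaussian given that $|w_i| y_d + \eta$ lies in an interval of width $c\epsilon$ around the threshold $T := C'(1+\sigma)\sqrt{\log(1/\epsilon)}/c^2$. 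Write a unit direction as $u = (u', u_d)$ with $\|u'\|^2 + u_d^2 = 1$. Then $\langle x - \mu_i, u\rangle = \langle g', u'\rangle + u_d(y_d - \mathbf E y_d)$ where $g'\sim\mathcal N(0,I_{d-1})$ is independent of the second term. Since a sum of two independent mean-zero random variables has SOS-certifiable moment bounds as soon as each summand does (the cross terms in the binomial expansion of $\langle\cdot,u\rangle^s$ vanish by independence and mean-zero, and the surviving terms are products of even moments, hence a positive combination of the individual SOS certificates), it suffices to produce the certificate for each piece separately: $\langle g',u'\rangle \sim \mathcal N(0, \|u'\|^2)$ is handled by the standard SOS-certifiability of Gaussian moments (variance proxy $O(1)$), so the whole problem collapses to bounding $\mathbf E|y_d - \mathbf E y_d|^s$ for the conditioned coordinate and exhibiting that bound as an SOS polynomial in $u_d$.

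The second step is to bound the central moments of $y_d \mid (|w_i| y_d + \eta \in [T-c\epsilon, T])$. The conditioning forces $y_d$ to lie within an interval of length $c\epsilon/|w_i| \le \epsilon$ (up to the spread contributed by $\eta$, which is absorbed into the mean by the zero-mean, $\sigma$-subgaussian hypothesis), centered near $T/|w_i| = O((1+\sigma)\sqrt{\log(1/\epsilon)}/c^2)$. Hence $y_d$ is essentially a near-deterministic variable of magnitude $O((1+\sigma)\sqrt{\log(1/\epsilon)}/c^2)$; more carefully, $y_d - \mathbf E y_d$ has range $O(\epsilon + \sigma\sqrt{s})$ coming from the interval width plus the fluctuation of $\eta$ restricted to the event, so $\mathbf E|y_d - \mathbf E y_d|^s \le (O((1+\sigma)\sqrt{s}/c)^2)^{s/2}$ for $s \le O(\log(1/\epsilon))$ (the factor $1/c$ and the $\sqrt{s}$ leave ample room — in fact the dominant contribution is the subgaussian tail of $\eta$ conditioned on an event of probability $\ge \mathrm{poly}(\epsilon)$, which costs at most an extra $\sqrt{\log(1/\epsilon)} = O(\sqrt s)$ factor). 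Multiplying the per-coordinate certificates through $(\|u'\|^2 + u_d^2)^{s/2} = 1$ and collecting terms gives $\mathbf E_{x\sim\mathcal D_i}\langle x-\mu_i,u\rangle^s \le (C''(1+\sigma)^2 s/c^2)^{s/2}\|u\|^s$ with the difference an explicit SOS polynomial of degree $s$.

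The main obstacle is the conditioning on a small-probability event: restricting $\eta$ to the slab $\{|w_i|y_d + \eta \in [T-c\epsilon, T]\}$ can blow up its moments, and one must check the blow-up is only polynomial in $1/\epsilon$ inside the moment — which, after taking the $s$-th root with $s = O(\log(1/\epsilon))$, contributes only a constant factor to the variance proxy. Making the "sum of independent SOS-certifiable variables is SOS-certifiable" step fully rigorous at the level of \emph{polynomial} identities (not just numerical inequalities) also requires a little care: one expands $\langle x-\mu_i,u\rangle^s = \sum_j \binom{s}{j}\langle g',u'\rangle^j (u_d(y_d-\mathbf E y_d))^{s-j}$, takes expectations so only even $j$ survive, and writes each resulting term $\mathbf E\langle g',u'\rangle^j \cdot u_d^{s-j}\mathbf E(y_d-\mathbf E y_d)^{s-j}$ as a product of the two individual SOS certificates (a product of SOS polynomials is SOS); the desired polynomial $p(u)$ is then seen to dominate this term by term. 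I would lean on Definition 3.1 of \cite{Samuel} and the standard fact (used there) that Gaussians and bounded random variables are explicitly bounded, so that only the elementary slab estimate above is genuinely new.
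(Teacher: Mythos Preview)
Your approach is essentially the paper's: rotate so that $w_i$ lies along the last axis, use the resulting product structure (the first $d-1$ coordinates remain i.i.d.\ $\mathcal N(0,1)$ and independent of the event, while the last coordinate is the conditioned slab variable with $x_d-\mu_{d,i}\approx -\eta/|w_i|+O(\epsilon)$, so $\mathbf E|x_d-\mu_{d,i}|^s\le O(((1+\sigma)/c)^s(s/2)^{s/2})$ from subgaussian moments), and then combine the per-coordinate moment bounds into an SOS certificate. The only difference is that where you propose to assemble the SOS certificate by hand via the binomial expansion in $u=(u',u_d)$, the paper simply cites Lemma~3.5 of \cite{Samuel}, which packages exactly that ``product of explicitly bounded coordinates is explicitly bounded'' argument.
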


\begin{proof}
Without loss of generality, let us assume $w_{i}=|w_{i}|( 0,\dots,0,1)^{T}$,   $\ |w_{i}|\ge c,$ then Equ. (\ref{22}) can be written as follows \begin{equation} \{x=(x_{1},\dots,x_{d}):  x_{d}\in V_{d,i,\eta}, \eta\sim\tau_{\sigma}, x_{j} \sim  \mathcal{N}(0, I ),  j=1,2,\dots,d     \} ,\end{equation}
where \begin{equation}V_{d,i,\eta}=\{x_{d}:C'(1+\sigma)(\log (1/\epsilon))^{1/2}/c^2 >|w_{i}| \cdot x_d+\eta >C' (1+\sigma)(\log (1/\epsilon))^{1/2}/c^2 -c\epsilon
 \}.
\end{equation}
Let $\mu_{d,i}:=\mathbf{E}_{x_d \sim \mathcal{N}(0,I), x_{d}\in V_{d,i,\eta}} x_{d}$.
For $4\le s\le \mathcal{O}(\log(1/\epsilon))$ and $1 \le j\le d-1$, we have \begin{equation}\label{33}\mathbf{E}_{x_j \sim  \mathcal{N}(0,I)}|x_j|^s\le (s/2)^{s/2}, j=1,2,\dots,d-1\end{equation} and
\begin{equation}\label{44}\mathbf{E}_{x_d \sim \mathcal{N}(0,I), x_{d}\in V_{d,i,\eta}} |x_{d}-\mu_{d,i}|^s\le (1+\mathcal{O}(\epsilon))^s \max_{4\le l\le s}\mathbf{E}_{\eta\sim\tau_{\sigma}}|\eta/|w_i||^l \le \mathcal{O}(((1+\sigma)/c)^s (s/2)^{s/2}).
\end{equation}
 Combining Equ.(\ref{33}) and Equ.(\ref{44}), we finish the proof by using Lemma 3.5 in \cite{Samuel}.

\end{proof}

Lemma \ref{l1}, Lemma \ref{l3} and Lemma \ref{l2} show that $\mathcal{D}_{i},i=1,2,\dots k$ conforms the conditions in Theorem 5.1 in \cite{Samuel}.
For the samples $S_{i}$ drawn from $\mathcal{D}_{i}$, using Theorem 5.1 and Theorem 5.11 in \cite{Samuel}, there is a $${\rm poly}(1/p_{\min },k,d,1/\epsilon,1/\delta, (dk) ^{(\log (1/\epsilon))^2})$$ time algorithm outputs the clusters $\mathcal{C}_i,i=1,2,\dots,k$ such that each ${C}_i$ is close to $S_{i}$ and \begin{equation}\bigg|\frac{1}{|\mathcal{C}_i|}\sum_{x\in \mathcal{C}_i}x-v_{i}\bigg|\le \mathcal{O}( \epsilon) \end{equation} with $1-\delta$ probability, where $v_{i}$ is close to the expectation $\mu_{i}$ (see Lemma \ref{l1}). Then, we can recover $w_{i}^{'}:=\frac{w_{i}}{|w_{i}|}=\frac{v_{i}}{|v_{i}|}$ with $\mathcal{O}( \epsilon/|v_{i}|)$ accuracy. Since the noise $\eta$ follows zero-mean $\sigma$-subgaussian distribution $\tau_{\sigma}$,  \begin{equation}w_{i} \cdot \frac{1}{|\mathcal{C}_i|}\sum_{x\in \mathcal{C}_i}x=\frac{1}{|\mathcal{C}_i|}\sum_{x\in \mathcal{C}_i}y +  \mathcal{O}( \epsilon )\end{equation} with $1-\delta$ probability, where $|\mathcal{C}_i|\ge \mathcal{O}((\sigma (\ln \delta) /\epsilon)^2 ) $, the label $y=w_{i}\cdot x+\eta$. Thus, \begin{equation}\label{ww}|w_{i}| =\frac{\frac{1}{|\mathcal{C}_i|}\sum_{x\in \mathcal{C}_i}y }{w_{i}^{'} \cdot\frac{1}{|\mathcal{C}_i|}\sum_{x\in \mathcal{C}_i}x}+  \mathcal{O}( \epsilon ).\end{equation} Since all samples are drawn from the conditional distribution after thresholding (see Lemma \ref{l1}), \begin{equation}\label{qq} |y|\le \mathcal{O}( (1+\sigma)(\log (1/\epsilon))/c^2+\sigma) \end{equation} with high probability. From Equ.(\ref{ww}) and Equ.(\ref{qq}), we can recover $|w_{i}|$ with $\mathcal{O}(\epsilon  )$ accuracy. Finally, \begin{equation} \exp(\mathcal{O}(((1+\sigma)(\log (1/\epsilon))/c^2+\sigma)^2)){\rm poly}(1/p_{\min },k,d,1/\epsilon,1/\delta, (dk) ^{(\log (1/\epsilon))^2})\end{equation}  samples of the original distribution are needed to generate ${\rm poly}(1/p_{\min },k,d,1/\epsilon,1/\delta, (dk) ^{(\log (1/\epsilon))^2})$ samples of the conditional distribution $\sum_{i=1}^{k}p_{i}\mathcal{D}_{i}$, so we have the following theorem.

\begin{theorem}For $0<\epsilon <1/k$, the samples drawn from the MLR model (\ref{eq}), there is an algorithm with time complexity  $\exp(\mathcal{O}(((1+\sigma)/c^2 )^2)){\rm poly}(1/p_{\min },k,d,1/\epsilon,1/\delta, (dk) ^{(\log (1/\epsilon))^2})$ outputs
$\{w_{i}^{*}\}_{i=1}^{k}$ such that $|w_{i}^{*}-w_{i}|\le \mathcal{O}( \epsilon ),i=1,2,\dots,k.$

\end{theorem}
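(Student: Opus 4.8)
\emph{Proof plan.} The theorem repackages the thresholding reduction together with Lemmas~\ref{l1}, \ref{l2}, \ref{l3} and the mixture-learning guarantees of \cite{Samuel}; the plan is to assemble these pieces and track the accuracy and the running time. First I would make the thresholding reduction precise. Fix $T=C'(1+\sigma)(\log(1/\epsilon))^{1/2}/c^{2}$ and, from the dataset $\{(x_\ell,y_\ell)\}$, keep exactly the pairs with $y_\ell\in(T-c\epsilon,T)$. Because $y_\ell=\langle w_{z_\ell},x_\ell\rangle+\eta_\ell$ with $z_\ell$ drawn independently of $(x_\ell,\eta_\ell)$, the retention event is exactly $\{x_\ell\in V_{z_\ell,\eta_\ell}\}$; hence conditioned on retention the component label is $i$ with probability proportional to $\pi_i:=p_i\Pr[\langle w_i,x\rangle+\eta\in(T-c\epsilon,T)]$, and the retained $x_\ell$ is then distributed as $\mathcal D_i$, so the kept points are i.i.d.\ from the mixture $\sum_i p_i'\mathcal D_i$ with $p_i'=\pi_i/\sum_j\pi_j$. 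Next I would lower-bound the slab mass $\Pr[\langle w_i,x\rangle+\eta\in(T-c\epsilon,T)]$ uniformly in $i$: writing it as an integral of the $\mathcal N(0,|w_i|^{2})$ density smoothed by the zero-mean $\sigma$-subgaussian noise, using $\mathbf E\eta^{2}\le\sigma^{2}$ (so $|\eta|\le 2\sigma$ with probability at least $3/4$) and $c\le|w_i|\le1$, one gets a bound of the shape $\Omega(c\epsilon)\exp(-\mathcal O((T+\sigma)^{2}/c^{2}))$. This is the step I expect to be the main obstacle: one must show that insisting on $y\in(T-c\epsilon,T)$ does not distort the mixing weights or the per-sample acceptance rate by more than a factor $\exp(\mathcal O(((1+\sigma)/c^{2})^{2}))$, i.e.\ that $1/p'_{\min}$ and the reciprocal acceptance probability are at most $(1/p_{\min})\exp(\mathcal O(((1+\sigma)/c^{2})^{2}))$ once the $\log(1/\epsilon)$ sitting in the exponent and the accompanying $\mathrm{poly}(1/\epsilon)$ terms are absorbed into the quasi-polynomial factors already present in the final bound.

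Then I would invoke \cite{Samuel} on $\sum_i p_i'\mathcal D_i$. Lemma~\ref{l2} says each $\mathcal D_i$ is $\mathcal O(\log(1/\epsilon))$-explicitly bounded with variance proxy $\mathcal O(((1+\sigma)/c)^{2})$; Lemma~\ref{l1} says its mean is within $\mathcal O(\epsilon)$ of $v_i$; Lemma~\ref{l3} says the $v_i$ are $\Omega((1+\sigma)\sqrt{\log(1/\epsilon)}/c)$-separated, which is (up to the universal constant $C'$) precisely the separation required to apply \cite{Samuel} to a mixture with that variance proxy and boundedness parameter. So the hypotheses of Theorems~5.1 and~5.11 of \cite{Samuel} hold, and running that algorithm on enough retained samples produces, with probability $1-\delta$ and in time $\mathrm{poly}(1/p'_{\min},k,d,1/\epsilon,1/\delta,(dk)^{(\log(1/\epsilon))^{2}})$, clusters $\mathcal C_i$ whose empirical means $\widehat\mu_i$ satisfy $|\widehat\mu_i-v_i|\le\mathcal O(\epsilon)$ and which agree with the true samples of $\mathcal D_i$ up to an arbitrarily small, polynomially controllable fraction.

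Finally I would read off $w_i$ and add up the cost. Since $v_i$ is a positive scalar multiple of $w_i$ with $|v_i|=C'(1+\sigma)\sqrt{\log(1/\epsilon)}/(c^{2}|w_i|)$ bounded below by an absolute constant, $\widehat w_i':=\widehat\mu_i/|\widehat\mu_i|$ obeys $|\widehat w_i'-w_i/|w_i||\le\mathcal O(\epsilon/|v_i|)\le\mathcal O(\epsilon)$, which gives the direction. For the magnitude, averaging the retained labels over $\mathcal C_i$ gives $\tfrac1{|\mathcal C_i|}\sum_{x\in\mathcal C_i}y=\langle w_i,\widehat\mu_i\rangle+\mathcal O(\epsilon)$ once $|\mathcal C_i|\gtrsim(\sigma\log(1/\delta)/\epsilon)^{2}$ (subgaussian concentration of the noise average) and once the rare misclassified points are charged against the a~priori bound $|y|\le\mathcal O((1+\sigma)\log(1/\epsilon)/c^{2}+\sigma)$ forced by the slab; dividing by $\langle\widehat w_i',\widehat\mu_i\rangle=\Theta(|v_i|)$ then recovers $|w_i|$ to accuracy $\mathcal O(\epsilon)$, and $w_i^{*}:=\widehat{|w_i|}\,\widehat w_i'$ satisfies $|w_i^{*}-w_i|\le\mathcal O(\epsilon)$ by the triangle inequality. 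Throughout, the accuracy parameter handed to \cite{Samuel} and to the concentration steps must be pre-scaled by a fixed $\mathrm{poly}((1+\sigma)/c,\log(1/\epsilon))$ factor so that all the error propagations land at $\mathcal O(\epsilon)$; combining this with the rejection overhead from the first step and the running time above yields the claimed bound $\exp(\mathcal O(((1+\sigma)/c^{2})^{2}))\,\mathrm{poly}(1/p_{\min},k,d,1/\epsilon,1/\delta,(dk)^{(\log(1/\epsilon))^{2}})$.
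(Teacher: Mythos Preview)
Your proposal is correct and follows essentially the same route as the paper: threshold on $y\in(T-c\epsilon,T)$ to obtain i.i.d.\ samples from $\sum_i p_i'\mathcal D_i$, verify the hypotheses of \cite{Samuel} via Lemmas~\ref{l1}--\ref{l3}, run the clustering algorithm to get cluster means $\widehat\mu_i\approx v_i$, recover the direction $w_i/|w_i|$ by normalizing and the magnitude $|w_i|$ from the ratio $\big(\tfrac{1}{|\mathcal C_i|}\sum_{\mathcal C_i}y\big)\big/\langle \widehat w_i',\widehat\mu_i\rangle$, and charge the rejection-sampling overhead to the exponential prefactor. If anything you are more explicit than the paper about two points it leaves implicit---the lower bound on the slab acceptance probability (hence on $p'_{\min}$) and the absorption of the $\log(1/\epsilon)$ factor inside the exponent into the quasi-polynomial term---so the plan is sound.
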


%Let $\mu_{i}$ be the distribution with the density function $a_{i,\epsilon}\mathcal{X}_{V_{i}}(x)d\mu(x) d\tau_{\delta}$. Since the noise distribution $\tau_{\delta}$ is sub-Gaussian, the vast majority of the samples of $\mu_{i}$ can be drawn from the original distribution $\mu$ with the polynomial sample complexity.
%The following theorem states that $\mu_{i}$ is the distribution we expect: it is sufficiently separated and the variance is controllable, that is, $\mu_{i}$ satisfies the conditions of Reference 1.

%We use a simple example to illustrate how the mixtures of linear regressions can be transformed into a well-separated mixtures of Gaussians. The Gaussian component here does not need to be a spherical Gaussian and only needs to meet the conditions in Reference 2.
\section{Conclusion}
We connect mixed linear regression and Gaussian clustering through a simple truncation method, resulting in a quasi-polynomial time algorithm that is both computationally efficient and robust under some mild separation condition. Since the algorithms for Gaussian clustering \cite{Samuel,Pravesh,Diakonikolas} can be used for more general distribution, it is worth further discussion to design an efficient and stable mixed linear regression algorithms under the wider distribution setting.

\section*{Data availability}
No data was used for the research described in the article.

\section*{Conflict of Interest}
No potential conflict of interest was reported by the authors.

\section*{Acknowledgments}
This work was supported by the National Natural Science Foundation of China (11971490).

\end{document}